\newcommand{\Holder}{H\"older\xspace}
\renewcommand{\t}{\mathbf{t}}
\newcommand{\Zspace}{\mathcal{Z}}
\newcommand{\f}{f}
\newcommand{\g}{g}
\newcommand{\q}{q}
\newcommand{\I}{I}
\newcommand{\entropy}{\mathcal{H}}
\newcommand{\trace}{\mathop{\textrm{tr}}}
\newcommand{\diag}[1]{\mathop{\textrm{diag}}\left(#1\right)}
\newcommand{\tauspace}{\mathcal{T}}
\newcommand{\truncnormdist}{\mathcal{TN}}
\newcommand{\basem}{\nu}
\newcommand{\transp}{^{T}}
\newcommand{\E}[2]{\mathbb{E}_{#1}\left[#2\right]}
\newcommand{\unigint}[1]{{U}_{\left[#1\right]}}
\newcommand{\Expectation}[2]{\int{#2}d\nu(\Z)}
\def\Indic#1{\mathbb{I}_{\{#1\}}}
\def\A{A}
\def\btau{{\bm{\tau}}}
\def\Z{Z}
\def\z{z}
\def\p{p}
\def\proba{p}
\newcommand{\logit}{\mathsf{logit}}
\renewcommand{\Re}{\mathbb{R}}
\newenvironment{proof}{\textsc{proof.}\it}{\hfill{$\Box$}}
\newtheorem{lemma}{Lemma}
\newtheorem{theorem}{Theorem}
\newtheorem{proposition}{Proposition}
\begin{document}

\title{Approximate Inference with the Variational \Holder Bound}
\date{}

\author{ Guillaume Bouchard \\
Department of Computer Science\\ University College London\\
\href{guillaume.m.bouchard@gmail.com}{\nolinkurl{guillaume.m.bouchard@gmail.com}} \\
\and
Balaji Lakshminarayanan  \\
Gatsby Unit\\ University College London          \\
\href{balaji@gatsby.ucl.ac.uk}{\nolinkurl{balaji@gatsby.ucl.ac.uk}}
}

\maketitle

\begin{abstract} 
We introduce the Variational \Holder (VH) bound as an alternative to Variational Bayes (VB) for approximate Bayesian inference. Unlike VB which typically involves maximization of a non-convex lower bound with respect to the variational parameters, the VH bound involves minimization of a convex upper bound to the intractable integral with respect to the variational parameters. Minimization of the VH bound is a convex optimization problem; hence the VH method can be applied using off-the-shelf convex optimization algorithms and the approximation error of the VH bound can also be analyzed using tools from convex optimization literature. 
We present experiments on the task of integrating a truncated multivariate Gaussian distribution and compare our method to VB, EP and a state-of-the-art numerical integration method for this problem. 
\end{abstract}

\section{Introduction}
Many Bayesian machine learning problems involve an intractable sum or integral, for which
numerical approximations methods have been derived. Approximate Bayesian inference techniques can be broadly classified into sampling-based (e.g.~Markov chain Monte Carlo) and optimization-based (e.g.~variational Bayes, expectation propagation) methods.
While sampling techniques
are widely used to explore the space and compute the %
statistics  of
interest for the problem, they are not always satisfying due to their stochastic
nature and it is hard to assess convergence. %

Many algorithms involve the computation an objective function, 
such as a loss function, a negative log-likelihood or a energy
criterion. However, the objective function itself often includes sums that are
slow to compute, requiring the approximation of this sum. This is the case in
empirical Bayes method (a.k.a. type-II maximum likelihood), mixture models with
a latent state space such as high-order hidden Markov models and restricted
Boltzmann machines, or even a simple Maximum Likelihood (ML) with fully observed
data:
the ML estimator of exponential family models with non-standard feature
functions requires the computation of the partition-function, which is
intractable as soon as the feature functions or the parameter space do not
belong the restricted class of \emph{tractable} models, including Gaussian
distributions and tree-structure graphical models for non-Gaussian
distributions. For other models, the partition function needs to be approximated
and a full set of approximate inference algorithms have been designed during the
last decades in including pseudo-likelihood
approaches~\citep{gourieroux1984pseudo}, but these approaches do not show 
good empirical performances and do not really help to predict the likelihood of the 
observations.
For other approximation schemes based on mean field approximations, obtaining
algorithms with provable polynomial-time convergence guarantees and other
theoretical guarantees is hard in general~\citep{wainwright2008graphical}.
   
In Bayesian statistics, many deterministic inference approaches have been
proposed, the main ones being Variational Bayes (VB)~\citep{williams1991mean,jordan1999introduction,attias2000variational},
Expectation-Propagation (EP)~\citep{minka2005divergence}, and Tree-Reweighted sum-product (TRW)~\citep{wainwright2005new}.
For continuous variables, classical approximate inference schemes are based on
EP or the Variational Gaussian (VG) representation, which is basically the information
inequality applied to the Gaussian case~\citep{ChallisBarber2011}.
However, the VG bound is known to be a crude inequality which tends to under
estimate the variance, leading to poor results in situations where variance
estimates are crucial, for example in Bayesian experimental
design~\citep{SeegerNickisch2011}.
More interestingly, \citet{liu11d} showed that new inference algorithms can be obtained 
by minimizing the generalized \Holder's inequality applied on the partition function of a discrete graphical model.
Such algorithms do not suffer from the zero-avoiding behavior of VB
and the lack of convergence guarantees of EP, and has strong connections with the TRW convex upper bound to the partition function. 

In this work, we introduce the Variational \Holder (VH) inequality, a family of tractable upper bounds to the product
of potentials, possibly defined on a continuous space, unlike previous work focusing %
only on the discrete case. Hence, our bound generalizes earlier work by \citet{liu11d} and is simpler in construction.  
{We show that we can infer continuous latent variables values in a Bayesian inference problem where the unnormalized integral is a product of two potentials corresponding to the prior and likelihood respectively. The optimization with respect to the variational parameters in VH is a convex optimization problem and can be solved using off-the-shelf tools. We compare the performance of our method to VB, EP and a state-of-the-art numerical optimizer on the task of integrating a truncated multivariate Gaussian distribution.}

\section{Variational \Holder bound}
\label{sec:vh}

\paragraph{Notations}

We define a probabilty space $(\Omega,\mathcal{F}, \nu)$ where $\Omega$ 
is a sample space and $\mathcal{F}$ a sigma-algebra defined on it.
Let $\Z$ be a $\nu$-distributed random variable taking values in a Hilbert space $\Zspace$.
We make use of $\mathcal{L}_p$ norms $\|.\|_p$ repeatedly, where $p>1$ and
$\|\f\|_p:=\bigl(\Expectation{\nu}{|f(\Z)|^p}\bigr)^{\frac 1p}$ for $p<\infty$ and $\|\f\|_\infty:=\sup_z |f(z)|$.
Let: %
\begin{eqnarray}
I^*:=\Expectation{}{\gamma_1(\Z)\gamma_2(\Z)}=\|\gamma_1\gamma_2\|_1
\end{eqnarray}
be the integral we want to approximate, also 
called the partition function of the unnormalized distribution with density $\gamma(\Z) :=\gamma_1(\Z)\gamma_2(\Z)$.

\newcommand{\balpha}{{\bm{\alpha}}}

\paragraph{Upper bound to the partition function}
We define the following functional:
\begin{eqnarray}
\label{eq:VH}
\bar I_\balpha(\Psi)&:=
		\|\gamma_1\Psi\|_{\alpha_1}\|\gamma_2/\Psi\|_{\alpha_2}
\end{eqnarray}
where the argument of $\bar I_\balpha$ is a positive function $\Psi:\Zspace\mapsto \Re^+$ %
which we refer to as the \emph{pivot function}
and $\balpha=(\alpha_1,\alpha_2)\in\Re^{+2}$. 
The main result of this paper is the study of a new inequality to the log-partition function, that we call \emph{Variational \Holder}~(VH) inequality because it
corresponds to a direct application of the well-known \Holder's inequality:
\begin{theorem}
\label{th:VH}
Let $\gamma_1$ and $\gamma_2$ be two positive measures defined on $\Zspace$. 
The following inequality:
\begin{align}
\label{eq:VH-logspace}
&I^* \le \bar I_\balpha(\Psi)
\enspace,
\end{align}
holds for any positive scalars $\alpha_1$ and $\alpha_2$ such that $\frac{1}{\alpha_1} + \frac{1}{\alpha_2}=1$ and any function $\Psi:\Zspace\mapsto \Re^+$.
Equality holds if for almost all $\z\in\Zspace$, 
$\Psi(\z) = \gamma_1(\z)^{-\frac {1}{\alpha_2}}  \gamma_2(\z)^{\frac{1} {\alpha_1}}$.
\end{theorem}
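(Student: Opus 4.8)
The plan is to observe that \eqref{eq:VH-logspace} is simply \Holder's inequality, with the strictly positive pivot $\Psi$ first inserted into the product $\gamma_1\gamma_2$ and then cancelled. First I would note that the hypothesis $\tfrac1{\alpha_1}+\tfrac1{\alpha_2}=1$ with $\alpha_1,\alpha_2>0$ forces $\alpha_1,\alpha_2\ge 1$ (if $\alpha_1<1$ then $\tfrac1{\alpha_2}=1-\tfrac1{\alpha_1}<0$, impossible), so $\|\cdot\|_{\alpha_1}$ and $\|\cdot\|_{\alpha_2}$ are genuine $\mathcal{L}_p$ norms and the classical \Holder inequality $\|fg\|_1\le\|f\|_p\|g\|_q$ applies. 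Since $\Psi>0$ everywhere, the factorization $\gamma_1(\z)\gamma_2(\z)=\bigl(\gamma_1(\z)\Psi(\z)\bigr)\bigl(\gamma_2(\z)/\Psi(\z)\bigr)$ is valid pointwise, and because $\gamma_1,\gamma_2$ are nonnegative it remains valid after taking absolute values. Applying \Holder with $f=\gamma_1\Psi$, $g=\gamma_2/\Psi$, $p=\alpha_1$, $q=\alpha_2$ then gives
\[
I^* = \|\gamma_1\gamma_2\|_1 = \bigl\|(\gamma_1\Psi)(\gamma_2/\Psi)\bigr\|_1 \le \|\gamma_1\Psi\|_{\alpha_1}\,\|\gamma_2/\Psi\|_{\alpha_2} = \bar I_\balpha(\Psi),
\]
which is exactly \eqref{eq:VH-logspace}, and the dependence on $\Psi$ is manifestly the only nontrivial content (in the trivial case $\Psi\equiv 1$ this is plain \Holder on $\gamma_1,\gamma_2$).

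For the equality statement I would invoke the equality condition in \Holder's inequality: $\|fg\|_1=\|f\|_p\|g\|_q$ holds iff $|f|^p$ and $|g|^q$ are $\nu$-proportional a.e., i.e.\ (in the nondegenerate case) there is a constant $c>0$ with $(\gamma_1\Psi)^{\alpha_1}=c\,(\gamma_2/\Psi)^{\alpha_2}$ for $\nu$-almost every $\z$. Solving for $\Psi$ gives $\Psi^{\alpha_1+\alpha_2}=c\,\gamma_2^{\alpha_2}\gamma_1^{-\alpha_1}$; using that $\tfrac1{\alpha_1}+\tfrac1{\alpha_2}=1$ is equivalent to $\alpha_1+\alpha_2=\alpha_1\alpha_2$, so that $\tfrac{\alpha_2}{\alpha_1+\alpha_2}=\tfrac1{\alpha_1}$ and $\tfrac{\alpha_1}{\alpha_1+\alpha_2}=\tfrac1{\alpha_2}$, this becomes $\Psi = c^{1/(\alpha_1+\alpha_2)}\,\gamma_1^{-1/\alpha_2}\gamma_2^{1/\alpha_1}$ a.e. In particular the specific pivot $\Psi=\gamma_1^{-1/\alpha_2}\gamma_2^{1/\alpha_1}$ stated in the theorem attains equality; the multiplicative constant $c^{1/(\alpha_1+\alpha_2)}$ is irrelevant because it cancels between the two factors of $\bar I_\balpha$.

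I do not expect any genuine obstacle: the argument is a one-line reduction to \Holder. The only points requiring a little care are measure-theoretic bookkeeping — confirming $\gamma_2/\Psi$ is well defined (guaranteed by $\Psi>0$), treating the $\nu$-null sets where $\gamma_1$ or $\gamma_2$ vanishes (there both sides of the proportionality are $0$, so the equality condition stays consistent and the characterization of $\Psi$ need only hold a.e., matching the ``for almost all $\z$'' phrasing), and the degenerate cases where one factor is $\nu$-a.e.\ zero (then $I^*=0$ and the bound is trivially an equality). It would also be worth remarking explicitly that the inequality is vacuous unless both $\|\gamma_1\Psi\|_{\alpha_1}$ and $\|\gamma_2/\Psi\|_{\alpha_2}$ are finite, which is exactly the admissibility requirement on $\Psi$ implicit in the definition \eqref{eq:VH} of $\bar I_\balpha$.
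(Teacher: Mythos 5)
Your proof of the inequality itself is exactly the paper's: both apply \Holder's inequality $\|fg\|_1\le\|f\|_{\alpha_1}\|g\|_{\alpha_2}$ to $f=\gamma_1\Psi$ and $g=\gamma_2/\Psi$, the pivot cancelling in the product. Where you diverge is the equality claim. The paper verifies sufficiency by direct substitution: it plugs $\Psi=\gamma_1^{-1/\alpha_2}\gamma_2^{1/\alpha_1}$ into $\bar I_\balpha$ and computes $\|(\gamma_1\gamma_2)^{1/\alpha_1}\|_{\alpha_1}\|(\gamma_1\gamma_2)^{1/\alpha_2}\|_{\alpha_2}=\|\gamma_1\gamma_2\|_1^{1/\alpha_1+1/\alpha_2}=I^*$. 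You instead invoke the equality criterion for \Holder\ (proportionality of $(\gamma_1\Psi)^{\alpha_1}$ and $(\gamma_2/\Psi)^{\alpha_2}$ a.e.) and solve for $\Psi$. Both are correct. Since the theorem only asserts an ``if,'' the paper's one-line computation is the lighter tool, and your route obliges you to cite the (standard but heavier) characterization of equality in \Holder\ and to police the degenerate and null-set cases you list. In exchange it buys a strictly stronger conclusion: equality holds \emph{only if} $\Psi$ agrees a.e.\ with $\gamma_1^{-1/\alpha_2}\gamma_2^{1/\alpha_1}$ up to a multiplicative constant --- a constant that, as you correctly note, is a gauge freedom of $\bar I_\balpha$ because it cancels between the two factors. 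Your preliminary observation that conjugacy with $\alpha_1,\alpha_2>0$ forces $\alpha_1,\alpha_2\ge 1$, so that the $\mathcal{L}_p$ norms are genuine, is a worthwhile sanity check that the paper leaves implicit.
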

\begin{proof}
The bound~\eqref{th:VH} is obtained using \Holder's inequality $\|fg\|_1\le\|f\|_{\alpha_1}\|g\|_{\alpha_2}$ for $f\leftarrow \gamma_1\Psi$ and $g\leftarrow \gamma_2/\Psi$.
The tightness result is given by a direct calculation:
{\small
$ \bar I_\balpha( \gamma_1^{-\frac {1}{\alpha_2}}  \gamma_2^{\frac{1} {\alpha_1}}) = \|\gamma_1^{1-\frac {1}{\alpha_2}} \gamma_2^{\frac{1} {\alpha_1}}\|_{\alpha_1}\|\gamma_1(\z)^{\frac{1} {\alpha_2}}\gamma_2^{1-\frac {1}{\alpha_1}} \|_{\alpha_2}=\|(\gamma_1\gamma_2)^{\frac {1}{\alpha_1}}\|_{\alpha_1}\|(\gamma_1\gamma_2)^{\frac {1}{\alpha_2}}\|_{\alpha_2}
=\|\gamma_1\gamma_2\|_1^{\frac {1}{\alpha_1}} \|\gamma_1\gamma_2\|_1^{\frac {1}{\alpha_2}} =  \|\gamma_1\gamma_2\|_1=I^*$.
}
\end{proof}

The key insight in the VH bound over the standard \Holder bound 
is that we will choose the pivot function $\Psi$ so that the bound is as close as possible to the target integral.
The upper bound on the right-hand side has several useful properties. The first one is that the upper bound 
can be \emph{tractable} even if the original quantity $\|\gamma_1\gamma_2\|_1$ is intractable. 
The second useful property of the log of the bound is \emph{convex} in $\log(\Psi)$, which makes it 
convenient to optimize, and in particular using gradient descent methods that are
provably convergent in polynomial time. Finally, this bound has theoretical properties that make it 
suitable for approximating distributions, as shown in the next section.

\section{Theoretical Guarantees}
In this section, we show that under mild conditions, the VH bound is good for variational inference; 
when the upper bound is close to the target partition function, then the resulting approximation
is also close to the target distribution $\proba^* :=\frac{\gamma_1\gamma_2}{I^*} $.

\begin{proposition}  
\label{th:approx}
For any $\varepsilon>0$, the inequality $I^* > (1-\varepsilon) \bar I_\balpha(\Psi)$ implies that:
\begin{align}
\left\|
	\frac{(\gamma_1\Psi)^{\alpha_1}}{\|\gamma_1\Psi\|^{\alpha_1}_{\alpha_1}} 
	- 
	\proba^*
\right\|_2 < \sqrt{2\varepsilon} + \varepsilon
\quad &  \mathrm{if} \quad \alpha_1 \le 2, \mathrm{\ and}
\\
\left\|
	\frac{(\gamma_2/\Psi)^{\alpha_2}}{\|\gamma_2/\Psi\|^{\alpha_2}_{\alpha_2}} 
	- 
	\proba^*
\right\|_2 < \sqrt{2\varepsilon} + \varepsilon
\quad  & \mathrm{if} \quad \alpha_1 \ge 2
\enspace.
\end{align}
\end{proposition}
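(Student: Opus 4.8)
The plan is to work with the two normalized factors $a := \gamma_1\Psi/\|\gamma_1\Psi\|_{\alpha_1}$ and $b := (\gamma_2/\Psi)/\|\gamma_2/\Psi\|_{\alpha_2}$, which satisfy $\|a\|_{\alpha_1}=\|b\|_{\alpha_2}=1$. With this scaling the first candidate density is $q_1 := a^{\alpha_1}$ (and $q_2 := b^{\alpha_2}$ the second), while $\proba^* = ab/c$, where $c := I^*/\bar I_\balpha(\Psi) = \int ab\,d\nu \in (1-\varepsilon,1]$ by Theorem~\ref{th:VH} and the hypothesis, so that $1-c<\varepsilon$. By symmetry (swap the two factors and $\alpha_1\leftrightarrow\alpha_2$) it suffices to treat $\alpha_1\le 2$. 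The algebraic identity $\alpha_1/\alpha_2=\alpha_1-1$ shows that \Holder equality corresponds to $b=a^{\alpha_1-1}$, and since $\int a^{\alpha_1}d\nu=1$ I record the exact tightness identity
\[
1-c \;=\; \int a^{\alpha_1}\,d\nu-\int ab\,d\nu \;=\; \int a\,(a^{\alpha_1-1}-b)\,d\nu,
\]
which expresses the gap as a single signed integral of the deviation $a^{\alpha_1-1}-b$ from equality.

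First I would split the target through the unnormalized product $ab=c\,\proba^*$ by the triangle inequality,
\[
\|q_1-\proba^*\|_2 \;\le\; \|q_1-ab\|_2 \;+\; \|ab-\proba^*\|_2 .
\]
The second term is exactly $\|(c-1)\proba^*\|_2=(1-c)\,\|\proba^*\|_2<\varepsilon\,\|\proba^*\|_2$, which I would like to identify with the ``$+\varepsilon$'' contribution. The first term is the genuine tightness-defect term: since $q_1-ab=a\,(a^{\alpha_1-1}-b)$, it equals $\bigl(\int a^{2}(a^{\alpha_1-1}-b)^{2}\,d\nu\bigr)^{1/2}$, and my goal would be to bound it by $\sqrt{2\varepsilon}$.

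The intended mechanism for the defect term is to compare its second moment $\int a^{2}(a^{\alpha_1-1}-b)^{2}$ against the controlled first moment $1-c=\int a(a^{\alpha_1-1}-b)$ through a pointwise estimate, deploying the restriction $\alpha_1\le2$. Writing $a=q_1^{1/\alpha_1}$ and $b=q_2^{1/\alpha_2}$, so that $\proba^*\propto q_1^{1/\alpha_1}q_2^{1/\alpha_2}$ is a tilted geometric mean, the condition $1/\alpha_1\ge\tfrac12$ makes $q_1$ the dominant factor in this target; this is precisely what singles out $q_1$ (rather than $q_2$) as the correct approximant to pair with $\proba^*$ in the regime $\alpha_1\le2$. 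The natural auxiliary tool is the Young/convexity estimate $ab\le\tfrac{1}{\alpha_1}a^{\alpha_1}+\tfrac{1}{\alpha_2}b^{\alpha_2}$, which I would use to dominate the deviation on the region where $b$ exceeds $a^{\alpha_1-1}$.

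The step I expect to be decisive — and the main obstacle — is this passage from the signed first moment, which the hypothesis controls, to the second moment that the $L_2$ distance of the \emph{densities} demands. Because $a\,(a^{\alpha_1-1}-b)$ changes sign, $1-c$ can be made small through cancellation while $\int a^{2}(a^{\alpha_1-1}-b)^{2}$ stays large; concretely, $b$ is unbounded and can concentrate against a region of small $\nu$-measure, so the defect term need not be $O(\sqrt\varepsilon)$ without an additional pointwise bound on the densities (for instance $\proba^*\le1$, automatic in the discrete counting-measure setting of \citet{liu11d}, which also yields $\|\proba^*\|_2\le1$ and hence the ``$+\varepsilon$'' term). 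Upgrading the affinity/Hellinger-type closeness that the overlap $c$ encodes to an honest $L_2$ closeness of the densities themselves is therefore the crux on which the argument turns, and is where I would concentrate the technical effort and state any integrability hypothesis needed to make the constant $\sqrt{2\varepsilon}+\varepsilon$ come out.
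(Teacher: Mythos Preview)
Your overall skeleton matches the paper's: normalize to $\|a\|_{\alpha_1}=\|b\|_{\alpha_2}=1$, split via the triangle inequality through the unnormalized product $ab$, identify one piece as the ``$+\varepsilon$'' contribution, and reduce the other to $\sqrt{2\varepsilon}$ from near-tightness of \Holder. The gap is precisely at the step you yourself flag as the obstacle: you factor $q_1-ab=a\,(a^{\alpha_1-1}-b)$ and are then stuck trying to control a second moment by the signed first moment $1-c$. That route does not close.

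The paper's resolution is a different factorization. Write
\[
q_1-ab \;=\; a^{\alpha_1/2}\bigl(a^{\alpha_1/2}-a^{\,1-\alpha_1/2}\,b\bigr),
\]
and apply Cauchy--Schwarz to bound the $L_1$ norm of this product by a product of $L_2$ norms. Since $\|a^{\alpha_1/2}\|_2=1$, one gets
\[
\|q_1-ab\|_1 \;\le\; \bigl\|a^{\alpha_1/2}-a^{\,1-\alpha_1/2}\,b\bigr\|_2 .
\]
Expanding the square, the first term is $1$, the cross term is exactly $-2\int ab=-2c\le -2(1-\varepsilon)$, and the last term is $A:=\int a^{2-\alpha_1}b^{2}\,d\nu$. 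The role of the assumption $\alpha_1\le 2$ is precisely that $\bigl(\tfrac{\alpha_1}{2-\alpha_1},\tfrac{\alpha_2}{2}\bigr)$ is a valid \Holder conjugate pair, so
\[
A \;\le\; \|a^{2-\alpha_1}\|_{\alpha_1/(2-\alpha_1)}\,\|b^{2}\|_{\alpha_2/2}
\;=\; \|a\|_{\alpha_1}^{\,2-\alpha_1}\,\|b\|_{\alpha_2}^{2}\;=\;1 .
\]
Hence $\|q_1-ab\|_1^2\le 1-2(1-\varepsilon)+1=2\varepsilon$, yielding the $\sqrt{2\varepsilon}$ term. No sign-cancellation issue arises, because in the squared $L_2$ expansion the cross term is \emph{exactly} $-2c$, not a signed integral of a difference.

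One further point: your worry about the ``$+\varepsilon$'' piece in $L_2$ is well founded. The paper's Appendix in fact carries the entire argument in $\|\cdot\|_1$ (where $\|ab-\proba^*\|_1=|1-1/c|\,\|ab\|_1=1-c<\varepsilon$ with no stray $\|\proba^*\|_2$ factor), and the $\|\cdot\|_2$ appearing in the Proposition statement seems to be a slip. Your instinct that a genuine $L_2$ bound on densities would require an extra boundedness hypothesis is correct; the clean statement and proof live in $L_1$.
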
 
The proof is given in the appendix for clarity, but it is novel and is one of the key contributions of the paper. 
Proposition~\ref{th:approx} shows that the smaller the relative gap of the VH inequality is, the better the functions $(\gamma_1 \Psi)^{\alpha_1}$ or $(\gamma_2/\Psi)^{\alpha_2}$ can approximate the target distribution $p^*$. This approximation is useful when $\gamma_1\gamma_2$ is hard to integrate, but $(\gamma_1 \Psi)^{\alpha_1}$ and $(\gamma_2/\Psi)^{\alpha_2}$ are easy to integrate.

Now that the approximation properties of the VH bounds have been highlighted, we describe how to effectively use these results in practice.

\section{\Holder Variational Bayes}
Based on the previous results, we obtain a variational algorithm to approximate product of factors by tractable factors.
To do that, we choose the pivot function $\Psi$ in a properly chosen \emph{tractable family} $\mathcal{F}=\{\Psi(.;\tau),\tau\in\tauspace\}$
 where $\tauspace$ is the set of variational parameters that defines the family.
Then, we obtain estimates for $\alpha_1$ and $\tau$ by minimizing\footnote{We optimize \eqref{eq:varobj} with respect to $\logit(1/\alpha_1)$ instead of $\alpha_1$, where $\logit(u)=\log(\frac{u}{1-u})$, since the former is an unconstrained minimization whereas the latter is a constrained minimization problem.}
 the VH bound~\eqref{eq:VH} over $\mathcal{F}$:
\begin{align}\label{eq:varobj}
  (\hat\tau,\hat\alpha_1) \in \arg\min_{\tauspace\otimes\Re} \bar I_\balpha(\Psi(.,;\tau))
\end{align}

Once the optimized values $(\hat\tau,\hat\alpha_1)$ have been found, the approximation to the exact intractable distribution $\proba^*$ are given by:
\begin{eqnarray}
\hat\proba_1(Z) := \frac{(\gamma_1(Z)\Psi(Z;\hat\btau))^{\alpha_1}}{\|\gamma_1\Psi(.;\hat\btau)\|^{\alpha_1}_{\alpha_1}}
\end{eqnarray}
and 
\begin{eqnarray}
\hat\proba_2(Z) := \frac{(\gamma_2(Z)/\Psi(Z;\hat\btau))^{\alpha_2}}{\|\gamma_2/\Psi(.;\hat\btau)\|^{\alpha_2}_{\alpha_2}}
\end{eqnarray}
Other moments can be computed in a similar fashion. 
Note that by choosing the proper approximating family $\mathcal{F}$, both distributions are assumed to be tractable, i.e. we can compute their normalization constant efficiently.
According to Proposition~\ref{th:approx}, one should choose $\hat\proba_1$ or $\hat\proba_2$ depending whether $\hat\alpha_1$ is 
smaller or greater than 2, but we also considered a convex combination of these two tractable distributions. This amounts to using the 
following mixture model:
\begin{eqnarray}
\hat\proba_{12}(Z) := \frac {1}{\alpha_1}\proba_1(Z) + \frac {1}{\alpha_2}\proba_2(Z)
\enspace
\end{eqnarray}
as an approximating distribution.
In their seminal paper, \citet{liu11d} also minimized the \Holder's bound with respect to parameters and exponent values, but
their approach is restricted to %
 discrete graphical models, and they applied this idea in the framework of the bucket elimination algorithm.

So far, the VH bound is very general. It can be applied on discrete and continuous spaces. The sole assumption we made is that $\proba^*\in\mathcal{L}_1$, $\gamma_1\Psi(:,\tau)\in\mathcal{L}_{\alpha_1}$ and$\gamma_2\/\Psi(:,\tau)\in\mathcal{L}_{\alpha_2}$ for any $\tau\in\tauspace$ and any $\alpha_k$ in the
range of \Holder's exponents that are considered. We now turn on to a specific class of functions to illustrate how the VH bound is used in practice.

\section{Application: Gaussian Integration}

\subsection{Problem Definition}

Using the notations of the previous section, we define $\gamma_1(t)=\prod_{i=1}^n f_i(t_i)$ where each function $f_i:\Re\mapsto \Re$ is univariate. We also define $\gamma_2(t)$ with $e^{-\frac 12 t\transp \A t + b\transp t}$, where $A$ is a symmetric $n\times n$ matrix and $\bm b\in\Re^n$.
We use the Lebesgue measure for $\nu$. Assume we want to evaluate: 
\begin{eqnarray} 
I^*:=\int_{\Re^n}  \prod_{i=1}^n f_i(t_i)
e^{-\frac 12 \t\transp \A \t + \bm b\transp \t} d\t
\enspace.
\label{univ-gaussian-integral} 
\end{eqnarray}
This type of integral is common in machine learning~\citep{seeger2010gaussian}. Typically, this corresponds to the marginal data probability --- a.k.a. the evidence --- of a linear regression model with known variance and sparse priors, $n$ being the number of variables. 
Up to an affine change of variable to obtain orthogonal univariate factors, this integral corresponds also to the data evidence 
of in a generalized linear model with Gaussian prior, where $n$ is the number of independent observations. 
The functions $\gamma_1$ and $\gamma_2$ alone are easy to integrate. This remain true when they are multiplied by a
Gaussian potential with diagonal covariance matrices, so that we can choose the following variational family
$\{\Psi(\t;\btau) ,\btau\in\Re^{2n}\}$:
\begin{eqnarray}
	\left\lbrace
		e^{-\frac 12 \t\transp\diag{\btau_1}\t +  \btau_2\transp\t},
		\quad \btau_1\in\Re^n,\btau_2 \in \Re^n 
	\right\rbrace
.
\label{Gaussian-pivot-functions}
\end{eqnarray}
A approximation to the integral~(\ref{univ-gaussian-integral})  is obtained by minimizing the upper bound $\bar\I_\balpha$ given in Equation~(\ref{eq:VH}).

\subsection{Integration of orthogonal univariate function}
\label{sec:univ-integral}
The first term in the bound $\|\gamma_1\Psi(.,\btau)\|_{\alpha_1}$ can be obtained 
efficiently in terms of univariate integrals:
\begin{eqnarray} 
\|\gamma_1\Psi(.,\btau)\|_{\alpha_1}^{\alpha_1}
&=&
\prod_{i=1}^n
\int_\Re f_i^{\alpha_1}(t_i)
e^{\alpha_1(-\frac{\tau_{1i} t_i^2}2 + \tau_{2i} t_i)}
dt_i
\nonumber\\
&=&
\prod_{i=1}^n
\unigint{f_i}\left(\tau_{1i},\tau_{2i},\alpha_1\right)
\enspace,
\label{ortho-univ}
\end{eqnarray}
where the univariate integrals $\unigint{h}:\Re^2\times [0;1]\mapsto\Re$ are defined as

\begin{eqnarray} 
\unigint{h}(a,b,\alpha_1):=\int_\Re \left(h(t)e^{-\frac 12 a t^2 + b t}\right)^{\alpha_1} dt_i
\enspace.
\end{eqnarray} 
Here, $h$ is an arbitrary univariate function $\Re\mapsto\Re^+$. These integrals
can be efficiently computed using quadrature integration (e.g. recursive
adaptive Simpson quadrature), but in many practical applications, the same
functions $f_i$ in Equation~(\ref{ortho-univ}) are used for many factors. A considerable speedup can be obtained
by designing integrals dedicated to some functions (in practice, using pre-computed functions with linear interpolation 
 is 10 to 100 times faster than running a new quadrature every time). One important special is the step function:
$f_i(x)=\Indic{x\ge 0}$ for all $i\in\{1,\cdots,n\}$.
For this function and using Gaussian pivot functions as specified in Equation~(\ref{Gaussian-pivot-functions}), we obtain a closed form expression in terms of normal CDF function~$\Phi$:			
\begin{eqnarray}
		\unigint{\Indic{\cdot\ge0}}(a,b,\alpha) 
		&		=		&
		\sqrt{\frac{2\pi}{\alpha a}} 
		  \Phi\left(b \sqrt{\frac{\alpha}{a}}\right)
e^{\frac{\alpha b^2}{2 a}}
\enspace.
\end{eqnarray}	
If there is no truncation in some dimensions, the constant one function gives:
$\unigint{\bm{1}}(a,b,\alpha) =\sqrt{\frac{2\pi}{\alpha a}} e^{\frac{\alpha b^2}{2 a}}.$

\subsection{Gaussian Integration}
Concerning the other factor $\|\gamma_2/\Psi(.,\btau)\|_{\alpha_2}$, its log-quadratic form corresponds to a standard Gaussian integral: 
\begin{flalign} 
\|\gamma_2/\Psi(.,\btau)\|_{\alpha_2}^{\alpha_2}
=
\int_{\Re^n} 
e^{-\frac{\alpha_2}{2} \t\transp (\A - \diag{\btau_1}) \t + {\alpha_2}(\bm b - \btau_2)\transp \t} d\t
= {(2\pi)^{\frac n2}}  e^{J\left(
		{\alpha_2}(A- \diag{\btau_1}),
		{\alpha_2}(\bm b - \btau_2)
	\right)},	
\nonumber
\end{flalign}
where $J(M,v):=-\frac 12 \log |M| + \frac 12 v\transp M^{-1} v$.

\begin{figure*}[t]
\centering
\includegraphics[width=.95\textwidth,, angle=0]{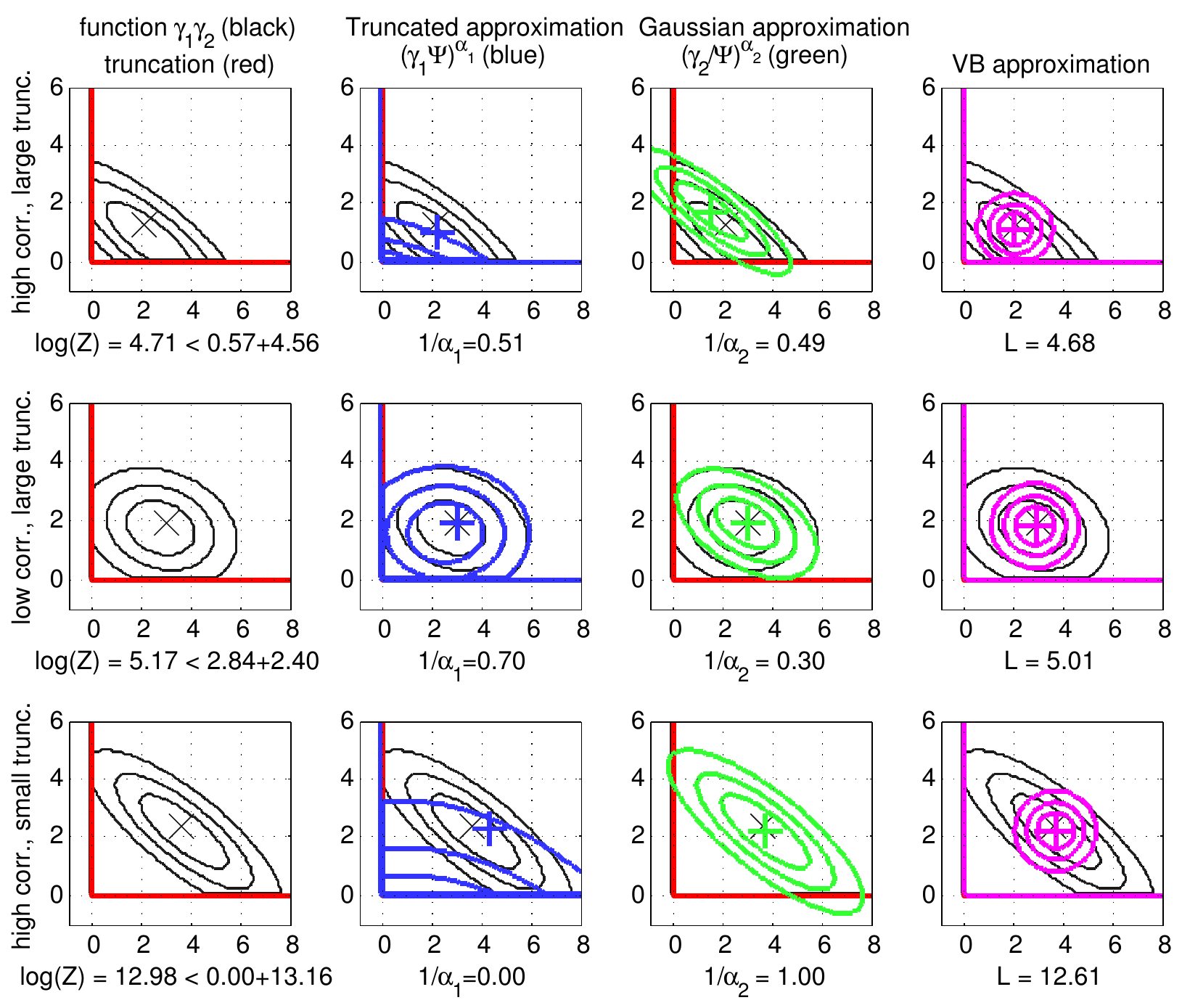}
\vspace{-10pt}
\caption{2D Truncated Gaussian integration. Each row represents a different
correlation/truncation setting. From left to right, the columns show 
1) the
target function $\gamma_1\gamma_2$, 2) its first tractable approximation
$(\gamma_1\Psi)^{\alpha_1}$ (product of orthogonal univariate function), 3)
its second tractable approximation $(\gamma_2/\Psi)^{\alpha_2}$ (correlated
Gaussian distribution) and 4) VB approximation. 
Symbols 'x' and '+' are the exact and approximate means, 
respectively.
}
\label{fig:TruncatedGauss}
\end{figure*}

\subsection{Truncated multi-variate Gaussian integration}
Most of the truncated multivariate Gaussian integration problems with linear
truncations can be put under the canonical form (\ref{univ-gaussian-integral})
where truncations are orthogonal,\footnote{ In their general form, truncated Gaussian integration problems
are based on the estimation of $\int_{\Re^n}  \prod_{i=1}^n f_i(\alpha_i +
\beta_i\transp t_i) e^{-\frac 12 \t\transp \A \t + b\transp \t} d\t$, but a
change of variable $\alpha_i + \beta_i\t_i \rightarrow \t_i$ leads to the
canonical form \eqref{univ-gaussian-integral} if there is no parallel truncation
lines, i.e. box constraints. Box constraints can be handled by a simple
modification involving two-sided univariate truncations. 
}
 i.e.
 $f_i(x)=\Indic{x\ge 0}$ for all $i\in\{1,\cdots,n\}$.
Integrating truncated correlated Gaussian is a known open problem for which
several approximation techniques have been proposed.
In numerical approximation,
adaptive quadratures approach have been well
investigated~\citep{genz2009computation}, but are still limited to small
dimensions. Approximate inference techniques,
such as Expectation-Propagation (EP), have been recently proposed, but the algorithm
remains unstable, even after specific improvements to increase the accuracy of
the method~\citep{Cunningham-et-al-Gaussian-EP2012}.
One of the reason is the fact that EP does not give any guarantee about the
approximation. To be used in a learning framework, upper and lower
bounds to the integral~(\ref{univ-gaussian-integral}) are often very useful. 
We focus here on the upper bound.\footnote{Lower bounding is not straightforward, since the
classical approach to obtain lower bounds is based on the information inequality
requires a class of approximation which is is contained in the support of the
target distribution, and this is not the case for multivariate Gaussian
distributions.} 
\paragraph{Initialization}
We need to initialize parameters so that the integral is tractable. This is not always trivial, but in principle, 
any point in the convex set $\tauspace=\{(\btau_1\transp, \btau_2\transp)\transp \in\Re^{2n} | 0\prec\diag{\btau_1}\prec A \}$ leads to a finite integral. For example, setting $\tau_{1i}$ to half of the minimum Eigen value of $A$ lies within the convex set.

\paragraph{Bound minimization}
After simplification, we get the following overall objective for the upper bound of a 
multivariate truncated Gaussian:
 \begin{flalign}
 \label{eq:obj}  
	\frac 1{\alpha_1}  \sum_i  \log\unigint{f_i}(\tau_{1i},\tau_{2i}, \alpha_1)  
  	 + \frac 1{\alpha_2} J({\alpha_2}(A-\diag{\btau_1}), {\alpha_2}(b-\tau_2))  -\frac n2 \log (2\pi). %
\end{flalign}

Figure~\ref{fig:TruncatedGauss} presents results on a two dimensional truncated gaussian integration problem. The optimal value of $\alpha_1$ depends on both the level of truncation and the correlation. %

\section{Comparison with Variational Bayes}

\subsection{Variational \Holder vs.~Variational Bayes}
One can compare the VH inequality~(\ref{eq:VH-logspace}) to the one provided by the \emph{Variational Bayes} 
(VB) inequality:
\begin{align}
\log\I^*
\ge 
\int \log\gamma_1(\z) dq(\z) + 
\int \log\gamma_2(\z) dq(\z) +
\mathcal{H}(Q)
\enspace.
\label{eq:VB-logspace}
\end{align}
for any distribution $Q$ absolutely continuous with respect to $\nu$, where $\mathcal{H}$ denotes the information entropy and $q=\frac{dQ}{d\nu}$. 

VB provides a lower bound to the log-sum-exp function, while the VH provides an upper bound. One disadvantage
 of the VB bound~(\ref{eq:VB-logspace}) is that it is not concave in general, 
leading to objective functions that are difficult to maximize and a bound that
does not come with theoretical guarantees. Another disadvantage is that the approximating
distribution $Q$ must have a support included in the base distribution $\nu$, which is not 
always convenient when the target distribution has subspaces with zero probability. 
This \emph{zero-avoiding} effect of the VB bound can lead to 
crude approximations of the original integral~\citep{minka2005divergence}.

\subsection{Variational Bayes for truncated multi-variate Gaussian integration}
As a comparison, we consider in this section the VB approach, that gives a lower bound to the likelihood. The key idea to be able to apply VB on this problem is to consider independent truncated Gaussian for the approximation family. We start with \eqref{univ-gaussian-integral}: 
\begin{align}
\log I^*:=\log \int_{\Re^n}  \prod_{i=1}^n f_i(t_i)
e^{-\frac 12 \t\transp \A \t + \bm b\transp \t} d\t  \geq L
\end{align}
where $L$ denotes the negative variational free energy, a.k.a.~the variational lower bound, given by
\begin{align}
L := \sum_{i=1}^n \E{q}{\log f_i(t_i)}
-\frac 12  \trace(\A \E{q}{\t \t\transp}) + \bm b\transp \E{q}{\t}  + \entropy[q]
\end{align}
We choose the variational distribution $q$ to be a product of univariate truncated normal distributions which are truncated at zero. Let $q=\prod_i \truncnormdist(\mu_i, \sigma_i)$. The variational bound is given by
\begin{align}{
L &=  -\frac{1}{2}\trace(A \E{q}{\t\t\transp}) + \bm b\transp \E{q}{\t}  +\frac{n}{2}\log(2\pi e)  +\sum_i\Bigl( \log\bigl(\sigma_i \Phi(\frac{\mu_i}{\sigma_i})\bigr) - \frac{\mu_i}{ 2\sqrt{2\pi} \sigma_i \Phi(\frac{\mu_i}{\sigma_i}) } \exp(-\frac{1}{2}\frac{\mu_i^2}{\sigma_i^2}) \Bigr), \nonumber\\
\E{q}{t_i} &= \mu_i + \frac{\sigma_i}{ \sqrt{2\pi} \Phi(\frac{\mu_i}{\sigma_i})}  \exp(-\frac{1}{2}\frac{\mu_i^2}{\sigma_i^2}),  \nonumber\\
\E{q}{t_i^2} &= \mu_i^2 + \sigma_i^2 + \frac{\mu_i\sigma_i}{ \sqrt{2\pi} \Phi(\frac{\mu_i}{\sigma_i})}  \exp(-\frac{1}{2}\frac{\mu_i^2}{\sigma_i^2}) 
\qquad \textnormal{and}\qquad \E{q}{t_i t_j} = \E{q}{t_i} \E{q}{t_j} \forall i\neq j.
}\end{align}
Minimization of the lower bound could be done iteratively by solving one-dimensional truncated Gaussian fits in a round-robin 
fashion, however, in the experiments below, we computed the gradient of the variational objective and used a 
gradient descent technique to find the optimal variational parameters.

\begin{table}[b]%
\begin{center}
\resizebox{0.5\textwidth}{!}{%
\begin{tabular}{|c|c|c|c|c|c|}
\hline
$\kappa$ & $n$ & Genz & EP & VB & VH \\
\hline
0.1&5&5.1499&5.1327&2.9489&6.4169\\
1&5&0.41768&0.41234&0.10715&0.98725\\\hline
0.1&20&24.7689&24.7702&17.5524&28.2854\\
1&20&1.9203&1.9196&0.97199&2.8037\\\hline
0.1&50&66.2055&66.1991&44.3699&68.971\\
1&50&9.9&9.8999&5.9196&11.2919\\\hline
\end{tabular}
}
\resizebox{0.45\textwidth}{!}{%
\begin{tabular}{|c|c|c|c|c|}
\hline
$\kappa$ & $n$ & VB vs EP & VH vs EP & VB vs VH \\
\hline
0.1&5&1.9286&0.35811&1.7518\\
1&5&0.12856&0.073666&0.19735\\\hline
0.1&20&3.0963&3.3148&2.9741\\
1&20&0.19727&0.074579&0.27093\\\hline
0.1&50&5.5944&0.72705&5.9132\\
1&50&0.51551&0.076875&0.58154\\\hline
\end{tabular}
}
\end{center}
\caption{Comparison of log partition function (left) and error in first moment (right)}
\label{tab:results}
\end{table}

\section{Experiments}  %

To undestand the properties of the VH bound, we compared its properties with existing deterministic integration. integration methods in high dimension. We considered the ground-truth to be the method of Alan Genz~\citep{genz2009computation}
which is based on a sophisticated technique of pseudo-random number generation. The main interest is that it can give an error estimate of the error, so that we can evaluate precisely the validity of various techniques. We used the matlab code provided by the author.
Another efficient technique for integration of truncated Gaussians is based on the use of Expectation-Propagation (EP), as
described by \citet{Cunningham-et-al-Gaussian-EP2012}. In this case as well, a matlab code is provided by the authors. 
Finally, we used implemented the VB version described above to obtain a lower bound to the true integral. 
Both VB and VH objectives were minimized using the L-BFGS algorithm provided by matlab \texttt{fminunc} function.

We used multiple correlations settings, where the precision matrix $A$ was obtained using the following rule:
$A:=\kappa I + v*v^T $, where $v$ is drawn from a $n$-dimensional Gaussian distribution with unit covariance matrix.
We varied the correlation by setting $\kappa\in\{0.1,1\}$ and the dimension by varying $n\in\{5, 20, 50\}$.
We also compared the accuracy of the moment computation, since large gap in the bound does not always imply large difference in the results. The method of Genz did not output moments, so we also compared the accuracy of the moment computation by computing the Euclidean norm of the difference between the three methods: EP, VB and VH for the mean
of the target distribution. %

 Table~\ref{tab:results} gives the results. The first 4 columns compare the integral values. We can see that VB correctly estimates a lower bound to the true integral, and that VG consistently gives an upper bound. EP seems to be generally very accurate, sometimes over-estimating, sometimes under-estimating the exact integral. An interesting phenomenon is that Holder is more accurate than VB in the high correlation setting ($\kappa=0.1$. This is expected since VB is unable to use correlation due to the fact that the 
 approximating family is composed by independent truncated Gaussian variables.

When comparing the moment computation, we see that Holder can give very accurate results, even if the gap in the bound was large. We also see that the higher the dimension, the better VH becomes with respect to VB. We also notice that the high correlation setting, VH and EP are closer to each other, compared to VB suggesting again, that high correlation are well handled by the VH approximation.

\section{Generalization for many factors}
\label{many-factor}
Here, we consider the more general case where the integral to compute is the product of $K$ factors, $K>2$:
$\I^*:=\int \prod_{k'=1}^K f_k(\z) d\basem(\z)$ where $f_k$, $k=1,\cdots, K$ are the individual factors.
We have the following results:
\begin{theorem}
The following inequality:
\begin{eqnarray}
\label{eq:generalization}
I^*
&\le& 
\prod_{k=1}^K \left(
	\int_{\Zspace} {\left(\frac{f_k(\z)}{\Psi_k(\z)}\right)}^{\alpha_k} 
	\prod_{k'=1}^K \Psi_{k'}(\z) d\basem(\z)
	\right)^{\frac{1}{\alpha_k}}
\label{eq:Holder2}
\end{eqnarray}
holds for any $\balpha=(\alpha_1,\cdots,\alpha_K)\in(0,\infty)^K$ such that $\sum_{k=1}^K \frac{1}{\alpha_k}=1$
and any function $\Psi_k:\Zspace\mapsto\Re^+$ in $\mathcal{L}_{\alpha_k}$, $k=1,\cdots, K$.
\end{theorem}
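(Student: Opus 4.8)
The plan is to reduce \eqref{eq:generalization} to the ordinary generalized \Holder inequality by a suitable regrouping of the integrand. For each $k\in\{1,\dots,K\}$ define
\[
g_k(\z):=\frac{f_k(\z)}{\Psi_k(\z)}\Bigl(\prod_{k'=1}^K\Psi_{k'}(\z)\Bigr)^{1/\alpha_k}.
\]
The first step is the pointwise identity $\prod_{k=1}^K g_k=\prod_{k=1}^K f_k$: the $K$ denominators $\Psi_k$ cancel exactly one copy each of the factors $\Psi_{k'}$, while $\prod_{k=1}^K\bigl(\prod_{k'}\Psi_{k'}\bigr)^{1/\alpha_k}=\bigl(\prod_{k'}\Psi_{k'}\bigr)^{\sum_k 1/\alpha_k}=\prod_{k'}\Psi_{k'}$ because $\sum_k 1/\alpha_k=1$. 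Since all the $f_k$ and $\Psi_k$ are nonnegative, this yields $I^*=\bigl\|\prod_k f_k\bigr\|_1=\bigl\|\prod_k g_k\bigr\|_1$.

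The second step is to apply the $K$-fold \Holder inequality $\bigl\|\prod_k g_k\bigr\|_1\le\prod_k\|g_k\|_{\alpha_k}$, valid for conjugate exponents $\alpha_k>0$ with $\sum_k 1/\alpha_k=1$; this follows by a straightforward induction on $K$ from the two-factor case invoked in the proof of Theorem~\ref{th:VH}. It then remains only to identify $\|g_k\|_{\alpha_k}$: using nonnegativity again,
\[
\|g_k\|_{\alpha_k}^{\alpha_k}=\int_{\Zspace}\Bigl(\frac{f_k(\z)}{\Psi_k(\z)}\Bigr)^{\alpha_k}\prod_{k'=1}^K\Psi_{k'}(\z)\,d\basem(\z),
\]
so raising to the power $1/\alpha_k$ and multiplying over $k$ reproduces exactly the right-hand side of \eqref{eq:generalization}.

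The only genuine subtlety --- and the step I would flag as the main obstacle --- is integrability. To invoke the $K$-fold \Holder inequality one wants each $g_k\in\mathcal{L}_{\alpha_k}(\basem)$, i.e.\ each integral $\int(f_k/\Psi_k)^{\alpha_k}\prod_{k'}\Psi_{k'}\,d\basem$ finite; if some such integral diverges, the right-hand side of \eqref{eq:generalization} is $+\infty$ and the claim is vacuous, so the hypothesis ``$\Psi_k\in\mathcal{L}_{\alpha_k}$'' in the statement is best read as the joint condition on $(f_k,\Psi_1,\dots,\Psi_K)$ that makes $g_k\in\mathcal{L}_{\alpha_k}$. Modulo this bookkeeping the proof is complete. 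As in Theorem~\ref{th:VH} one can also record when equality holds: \Holder is tight precisely when the $|g_k|^{\alpha_k}$ are all ($\basem$-a.e.) proportional, which, after the substitution above, amounts to choosing pivots $\Psi_k$ for which $(f_k/\Psi_k)^{\alpha_k}$ is independent of $k$ up to a multiplicative constant --- equivalently $g_k\propto\bigl(\prod_{k'}f_{k'}\bigr)^{1/\alpha_k}$.
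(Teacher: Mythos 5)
Your proof is correct, and it is considerably more explicit than what the paper provides: the paper's ``proof'' is a one-line sketch claiming the result follows by applying the binary VH bound recursively, with no details. Your route is genuinely different in form: rather than peeling off one factor at a time, you make the single substitution $g_k:=(f_k/\Psi_k)\bigl(\prod_{k'}\Psi_{k'}\bigr)^{1/\alpha_k}$, verify the pointwise cancellation $\prod_k g_k=\prod_k f_k$ (using $\sum_k 1/\alpha_k=1$), and then invoke the $K$-fold generalized \Holder inequality once; the stated right-hand side is then literally $\prod_k\|g_k\|_{\alpha_k}$. This buys transparency --- the bound is revealed to be nothing more than generalized \Holder applied to a reparametrized set of factors, and the equality condition you derive ($g_k^{\alpha_k}$ mutually proportional, i.e.\ $(f_k/\Psi_k)^{\alpha_k}$ independent of $k$ up to constants) falls out for free, paralleling the tightness statement of Theorem~\ref{th:VH} which the paper does not restate for $K>2$. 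The recursive route the paper gestures at would require nontrivial bookkeeping to see that iterating the binary bound actually lands on the symmetric expression \eqref{eq:generalization}, so your argument is arguably the cleaner one. Your remark on integrability is also apt: the hypothesis ``$\Psi_k\in\mathcal{L}_{\alpha_k}$'' as literally written does not guarantee $g_k\in\mathcal{L}_{\alpha_k}$, but when some $\|g_k\|_{\alpha_k}$ is infinite the inequality holds vacuously, so nothing is lost.
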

\begin{proof}(sketch)
Similarly to the generalization of \Holder's inequality for a product of functions, we can apply the binary VH bound recursively.
\end{proof}\\
One can verify that we recover the results of Section~2 for $K=2$. 
The VH method can be obtained by parameterizing the pivot functions and minimizing~\eqref{eq:generalization}  with respect
to the pivot functions $\Psi_k$ and $\balpha$.
Tightness and approximation properties studied in Section~3
can also be extended to the case of multiple factors.

\section{Discussion}
We have introduced a new family of variational approximations that are based on the 
minimization of an upper bound to the log-partition function. We demonstrated that
the variational inference problem is convex if the variational function is log-linear, 
which has great practical and theoretical advantages over mean-field/VB approximations,
which is the main approach used today by practitioners. We also provide a novel way to
handle Gaussian integration problems. In fact, we could express probit regression as a special case of this problem, and the extension to other distribution is possible in theory. One of the unique feature of this approach is that
the approximation maintains the heavy tails, but we still
a convex objective. Further experiments will be conducted to evaluate how good this 
approximation behaves in Bayesian posterior estimation. 

The VH framework presented here is very general, and can be applied to many models and
optimized using a large variety of algorithms and speedup tricks, similarly to what happened 
with VB and EP other the last two decades. 

We focused mainly on one type of intractable integrals that is common in machine learning 
problems (GLM or linear models with sparse priors), but the approach is generic and
could potentially be applied in many other settings. One of the main area of application
is the inference in graphical models with discrete variables, on which the TRW sum-product
algorithm has been designed~\citep{wainwright2005new}, as well as several other
algorithms dedicated to discrete graphical models~ \citep{liu11d}. It also provides an upper bound to the log-partition function
and is convex if the tree-weights are known. An alternative proof to the TRW bound based on the \Holder inequality
was given by~\citet{minka2005divergence}, and we conjecture that the TRW bound could be 
expressed as a special case of the proposed approach, for example by assuming that there is 
one factor per possible spanning tree.

\newpage
\bibliography{holder}
\bibliographystyle{plainnat}

\renewcommand{\f}{f}
\renewcommand{\g}{g}
\renewcommand{\p}{p}
\renewcommand{\q}{q}

\newpage
\vspace{-0.1in}
\section*{Appendix}
\vspace{-0.1in}
\subsection*{Pre-requisites}
In the following, the symbols $\f$ and $\g$ represent $\nu-$measurable positive functions, and $\p$ and $\q$ are positive  scalars such that $\frac 1\p+\frac 1\q=1$.

\vspace{-0.1in}
\begin{lemma}  
\label{lem:approx0}
For any $\varepsilon>0$, the inequality $\|\f\g\|_1> (1-\varepsilon) \|\f\|_\p \|\g\|_\q$ implies that:
\begin{align}
\left\|
	\frac{\f^{\p}}{\|\f\|^{\p}_{\p}} 
	- 
	\frac{\f\g}{\|\f\|_\p \|\g\|_\q} 
\right\|_1 \le \sqrt{2\varepsilon}
\quad &  \mathrm{if} \quad \p \le 2, \mathrm{\ and}
\nonumber
\\
\left\|
	\frac{\g^{\q}}{\|\g\|^{\q}_{\q}} 
	- 
	\frac{\f\g}{\|\f\|_\p \|\g\|_\q} 
\right\|_1 \le \sqrt{2\varepsilon} 
\quad  & \mathrm{\ if \ } \p\ge 2
\label{eq:fbigger}
\end{align}
\end{lemma}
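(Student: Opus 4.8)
The plan is to reduce this two-sided stability estimate for \Holder's inequality to an elementary statement about the second moment of a single nonnegative random variable, via a change of measure. First I would normalize: replacing $\f$ by $\f/\|\f\|_\p$ and $\g$ by $\g/\|\g\|_\q$ affects neither the hypothesis nor the conclusion, so we may assume $\|\f\|_\p=\|\g\|_\q=1$. The hypothesis becomes $\int \f\g\,d\nu > 1-\varepsilon$, and in the case $\p\le 2$ the goal is $\int|\f^\p-\f\g|\,d\nu\le\sqrt{2\varepsilon}$; the case $\p\ge 2$ is identical after swapping the roles of $(\f,\p)$ and $(\g,\q)$, since $\p\le 2\iff\q\ge 2$.

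Next I would introduce the probability measure $d\mu:=\f^\p\,d\nu$ and the nonnegative function $w:=\g\,\f^{1-\p}$ (defined $\mu$-a.e.). The purpose of this substitution is the three pointwise identities $\f\g=w\,\f^\p$, $\g^\q=w^\q\,\f^\p$ (using $(\p-1)\q=\p$, which follows from $\tfrac1\p+\tfrac1\q=1$), and $|\f^\p-\f\g|=|1-w|\,\f^\p$. Integrating these against $d\nu$ converts everything into moments of $w$ under $\mu$: the normalization $\|\g\|_\q=1$ gives $\mathbb{E}_\mu[w^\q]=1$, the hypothesis gives $\mathbb{E}_\mu[w]>1-\varepsilon$, and the target inequality is exactly $\mathbb{E}_\mu[|1-w|]\le\sqrt{2\varepsilon}$.

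The rest is short. Since $\q\ge 2$, Jensen's inequality for the convex map $x\mapsto x^{\q/2}$ gives $1=\mathbb{E}_\mu[w^\q]=\mathbb{E}_\mu[(w^2)^{\q/2}]\ge(\mathbb{E}_\mu[w^2])^{\q/2}$, hence $\mathbb{E}_\mu[w^2]\le 1$. Expanding, $\mathbb{E}_\mu[(1-w)^2]=1-2\,\mathbb{E}_\mu[w]+\mathbb{E}_\mu[w^2]\le 1-2(1-\varepsilon)+1=2\varepsilon$, and Cauchy--Schwarz (Jensen once more) yields $\mathbb{E}_\mu[|1-w|]\le(\mathbb{E}_\mu[(1-w)^2])^{1/2}\le\sqrt{2\varepsilon}$, which is the claim.

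The only genuinely non-routine step is spotting the change of measure $d\mu=\f^\p\,d\nu$ together with the pivot ratio $w=\g\,\f^{1-\p}$; once this is in place the estimate is essentially forced. A minor point to check is that the substitution is harmless on $\{\f=0\}$, which has $\mu$-measure zero and on which $\f^\p=\f\g=0$, so it contributes nothing to any integral; one should also note that the statement implicitly assumes $0<\|\f\|_\p,\|\g\|_\q<\infty$ and $\f\g\in\mathcal{L}_1$ (the latter automatic from \Holder). The assumption $\q\ge 2$ (i.e. $\p\le 2$) is used exactly once, in the Jensen step bounding $\mathbb{E}_\mu[w^2]$ by $1$, which is precisely why the conclusion switches to $\g^\q/\|\g\|_\q^\q$ in the regime $\p\ge 2$.
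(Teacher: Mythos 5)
Your proof is correct and is essentially the paper's own argument in different clothing: the change of measure $d\mu = f^{p}\,d\nu$ with $w = g f^{1-p}$ turns the paper's Cauchy--Schwarz step into $\mathbb{E}_\mu[|1-w|]\le(\mathbb{E}_\mu[(1-w)^2])^{1/2}$, the expansion of the square into $1-2\mathbb{E}_\mu[w]+\mathbb{E}_\mu[w^2]$, and the paper's \Holder bound $A\le 1$ (with exponents $p/(2-p)$ and $q/2$) into your Jensen bound $\mathbb{E}_\mu[w^2]\le(\mathbb{E}_\mu[w^q])^{2/q}\le 1$. The change-of-measure packaging is clean and arguably more transparent, but step for step it is the same proof.
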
 

\vspace{-0.1in}

\begin{proof}
Let assume that  $\|\f\|_{\p} \le \|\g\|_{\q}$ and $p\in(1,2]$. 
\begin{eqnarray}
&&
\left\|
	\frac{\f^{\p}}{\|\f\|^{\p}_{\p}} 
	- 
	\frac{\f\g}{\|\f\|_\p \|\g\|_\q} 
\right\|_1
\nonumber\\
 &&= 
\left\|
	\frac{\f^{\frac{\p}{2}}}{\|\f\|^{\frac{\p}{2}}_{\p}} \left(\frac{\f^{\frac{\p}{2}}}{\|\f\|^{\frac{\p}{2}}_{\p}}
	- 
	\frac{\f^{1-\frac{\p}{2}}\g}{\|\f\|^{1-\frac{\p}{2}}_\p \|\g\|_\q} 
	 \right)
\right\|_1 
\nonumber\\
&&\le
\underbrace{
\left\|
	\frac{\f^{\frac{\p}{2}}}{\|\f\|^{\frac{\p}{2}}_{\p}} 
\right\|_2
}_{=1}
\left\|	
	\frac{\f^{\frac{\p}{2}}}{\|\f\|^{\frac{\p}{2}}_{\p}}
	- 
	\frac{\f^{1-\frac{\p}{2}}\g}{\|\f\|^{1-\frac{\p}{2}}_\p \|\g\|_\q} 
\right\|_2 
\label{eq:temp1}
\nonumber
\end{eqnarray}
by Cauchy-Schwartz inequality. We can expand the square of the right-hand term in the product:
\begin{eqnarray}
&&\left\|	
	\frac{\f^{\frac{\p}{2}}}{\|\f\|^{\frac{\p}{2}}_{\p}}
	- 
	\frac{\f^{1-\frac{\p}{2}}\g}{\|\f\|^{1-\frac{\p}{2}}_\p \|\g\|_\q} 
\right\|_2^2 
\nonumber\\
&&= 
\underbrace{
\left\|	
	\frac{\f^{\frac{\p}{2}}}{\|\f\|^{\frac{\p}{2}}_{\p}}
\right\|_2^2
}_{=1}
- 2 
\underbrace{
\frac{\|fg\|_1}{\|\f\|_\p \|\g\|_\q}
}_{\ge 1-\varepsilon}
+ 
\underbrace{
\left\|	
	\frac{\f^{1-\frac{\p}{2}}\g}{\|\f\|^{1-\frac{\p}{2}}_\p \|\g\|_\q} 
\right\|_2^2
}_{:=A}
\nonumber
\\
&&\le A - 1 + 2\varepsilon
\label{eq:temp2}
\enspace,
\end{eqnarray}
where we denote by $A$ the quantity:
\begin{eqnarray}
A := \left\|	
	\frac{\f^{1-\frac{\p}{2}}\g}{\|\f\|^{1-\frac{\p}{2}}_\p \|\g\|_\q} 
\right\|_2^2
=\frac{\left\|	\f^{1-\frac{\p}{2}}\g \right\|_2^2}{\|\f\|^{2-\p}_\p \|\g\|_\q^2} 
=
	\frac{\left\|	\f^{2-\p}\g^2 \right\|_1}{\|\f\|^{2-\p}_\p \|\g\|_\q^2} 
\nonumber
\end{eqnarray}
Assuming $\p\le 2$, we can now bound $A$ by using \Holder's inequality with exponents $p'=\frac{\p}{2-\p}$ and $\q'=\frac{\q}{2}$. One can verify that the pair $(p',q')$ is a valid \Holder's exponent: $\q'\ge 1$, $\p'\ge1$ and $\frac{1}{\p'}+\frac{1}{\q'}=\frac{2-\p}{\p} + \frac{2}{\q} = \frac{2}{\p} +\frac{2}{\q} - 1 = 2-1=1$. We obtain:
\begin{eqnarray}
A &\le&
	\frac{\left\|	\f^{2-\p} \right\|_{\p'} \left\|\g^2 \right\|_{\q'} }
		{\|\f\|^{2-\p}_\p \|\g\|_\q^2} 
	=
	\frac{\left\|	\f \right\|_{\p}^{2-\p} \left\|\g \right\|_{\q}^2 }
		{\|\f\|^{2-\p}_\p \|\g\|_\q^2}  = 1
\nonumber
\end{eqnarray}
This results proves that Equation~\eqref{eq:temp2} is upper bounded by $2\varepsilon$, so the Equation~\eqref{eq:temp1} leads to the following inequality:
\begin{eqnarray}
\left\|
	\frac{\f^{\p}}{\|\f\|^{\p}_{\p}} 
	- 
	\frac{\f\g}{\|\f\|_\p \|\g\|_\q} 
\right\|_1 &\le& 
\sqrt{2\varepsilon}
\label{eq:final}
\end{eqnarray}
Equation~\eqref{eq:fbigger} follows by symmetry. 
\end{proof}

\begin{theorem}  
\label{th:approx0}
For any $\varepsilon>0$, the inequality $\|\f\g\|_1> (1-\varepsilon) \|\f\|_\p \|\g\|_\q$ implies that:
\begin{align}
\left\|
	\frac{\f^{\p}}{\|\f\|^{\p}_{\p}} 
	- 
	\frac{\f\g}{\|\f\g\|_1} 
\right\|_1 \le \sqrt{2\varepsilon} + \varepsilon
\quad &  \mathrm{if} \quad \q\ge\p, \mathrm{\ and}
\nonumber
\\
\left\|
	\frac{\g^{\q}}{\|\g\|^{\q}_{\q}} 
	- 
	\frac{\f\g}{\|\f\g\|_1} 
\right\|_1 \le \sqrt{2\varepsilon} + \varepsilon
\quad  & \mathrm{\ if \ } \quad \p\ge\q
\label{eq:fbigger2}
\end{align}
\end{theorem}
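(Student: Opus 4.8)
The plan is to reduce Theorem~\ref{th:approx0} to Lemma~\ref{lem:approx0} by replacing the normalization $\|\f\|_\p\|\g\|_\q$ of the product $\f\g$ with $\|\f\g\|_1$, absorbing the discrepancy into an extra additive $\varepsilon$.

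First I would check that the case hypothesis $\q\ge\p$ coincides with the hypothesis $\p\le2$ used in Lemma~\ref{lem:approx0}: under $\tfrac1\p+\tfrac1\q=1$, $\p\le\q$ gives $1=\tfrac1\p+\tfrac1\q\le\tfrac2\p$, hence $\p\le2$, and the converse is identical. So the hypothesis of the theorem lets me apply Lemma~\ref{lem:approx0} verbatim.

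Next I would split the quantity to be bounded with the $\mathcal{L}_1$ triangle inequality,
\[
\left\|\frac{\f^{\p}}{\|\f\|^{\p}_{\p}}-\frac{\f\g}{\|\f\g\|_1}\right\|_1
\le
\left\|\frac{\f^{\p}}{\|\f\|^{\p}_{\p}}-\frac{\f\g}{\|\f\|_\p\|\g\|_\q}\right\|_1
+\left\|\frac{\f\g}{\|\f\|_\p\|\g\|_\q}-\frac{\f\g}{\|\f\g\|_1}\right\|_1 ,
\]
bound the first term by $\sqrt{2\varepsilon}$ using Lemma~\ref{lem:approx0}, and evaluate the second term exactly by pulling $\f\g$ out of the norm:
\[
\left\|\frac{\f\g}{\|\f\|_\p\|\g\|_\q}-\frac{\f\g}{\|\f\g\|_1}\right\|_1
=\left|\frac{\|\f\g\|_1}{\|\f\|_\p\|\g\|_\q}-1\right| .
\]
\Holder's inequality makes the ratio $\le1$ and the hypothesis makes it $>1-\varepsilon$, so this term is $<\varepsilon$; summing gives $\sqrt{2\varepsilon}+\varepsilon$. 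The second displayed inequality of the theorem then follows by exchanging $(\f,\p)\leftrightarrow(\g,\q)$.

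I do not expect a genuine obstacle here: the argument is a one-line triangle-inequality reduction to the already-proved lemma. The only points that need care are verifying that the two formulations of the case split agree, and checking that the change-of-normalization term is controlled precisely by the relative gap $\varepsilon$ (and not by a larger quantity), which uses both \Holder's inequality and the hypothesis, in the two directions.
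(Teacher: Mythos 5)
Your proposal is correct and is essentially the paper's own proof: the same triangle-inequality split into a term controlled by Lemma~\ref{lem:approx0} and a change-of-normalization term equal to $\bigl|\,\|\f\g\|_1/(\|\f\|_\p\|\g\|_\q)-1\bigr|<\varepsilon$. The only addition is your explicit check that $\q\ge\p$ is equivalent to $\p\le 2$ under $\tfrac1\p+\tfrac1\q=1$, which the paper leaves implicit.
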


\begin{proof}
\begin{eqnarray}
&&\left\|
	\frac{\f^{\p}}{\|\f\|^{\p}_{\p}} 
	- 
	\frac{\f\g}{\|\f\g\|_1} 
\right\|_1
\nonumber\\
&&
\le 
\underbrace{
\left\|
	\frac{\f^{\p}}{\|\f\|^{\p}_{\p}} 
	- 
	\frac{\f\g}{\|\f\|_\p \|\g\|_\q} 
\right\|_1
}_{\le\sqrt{2\varepsilon}\ \mathrm{by~Lemma~\ref{lem:approx0} }}
+
\underbrace{
\left\|
	\frac{\f\g}{\|\f\|_\p \|\g\|_\q} 
	- 
	\frac{\f\g}{\|\f\g\|_1} 
\right\|_1
}_{\le \varepsilon}
\nonumber
\end{eqnarray}
Equation~\eqref{eq:fbigger2} follows by symmetry.
\end{proof}

\subsection*{Proof of Proposition~\ref{th:approx}}
We are now ready to obtain the proof for the bound approximation property:

\begin{proof}
Apply Theorem~\ref{th:approx0} with $f:=\gamma_1\Psi$, $\g:=\gamma_2/\Psi$, $\p:=\alpha_1$, $\q:=\alpha_2$, $I^*:=\|\f\g\|_1$ and $\bar I_\balpha (\Psi):= \|\gamma_1\Psi\|_\p\|\gamma_2/\Psi\|_\q$.
\end{proof}

\end{document}